\newtheorem{thm}{Theorem}
\newtheorem{definition}{Definition}
\newtheorem{lemma}{Lemma}
\def \x {\mathbf{x}}
\def \z {\mathbf{z}}
\def \w {\mathbf{w}}
\def \X {\mathcal{X}}
\def \Y {\mathcal{Y}}
\def \Z {\mathcal{Z}}
\def \R {\mathcal{R}}
\def \W {\mathcal{W}}
\def \F {\mathcal{F}}
\def \H {\mathcal{H}}
\def \R {\mathcal{R}}
\def \r {\mathbf{r}}
\newenvironment{proof}{\textbf{Proof:}\ }{\hspace{\stretch{1}}$\square$\\}
\begin{document}
\begin{frontmatter}
\title{Dropout Rademacher Complexity of Deep Neural Networks}

\author{Wei Gao}
\author{Zhi-Hua Zhou\corref{cor1}}
\address{National Key Laboratory for Novel Software Technology\\
Nanjing University, Nanjing 210093, China} \cortext[cor1]{\small Corresponding author.
Email: zhouzh@nju.edu.cn}

\begin{abstract}
Great successes of deep neural networks have been witnessed in various real applications. Many algorithmic and implementation techniques have been developed; however, theoretical understanding of many aspects of deep neural networks is far from clear. A particular interesting issue is the usefulness of \textit{dropout}, which was motivated from the intuition of  preventing complex co-adaptation of feature detectors. In this paper, we study the Rademacher complexity of different types of dropout, and our theoretical results disclose that for shallow neural networks (with one or none hidden layer) dropout is able to reduce the Rademacher complexity in polynomial, whereas for deep neural networks it can amazingly lead to an exponential reduction.
\end{abstract}

\begin{keyword}
Deep learning \sep neural network \sep generalization \sep Rademacher complexity \sep overfitting
\end{keyword}
\end{frontmatter}

Deep neural networks \cite{Hinton:Salakhutdinov2006} has become a hot wave during the past few years, and great successes have been achieved in various applications, such as object recognition \cite{Ciresan:Meier:Gambardella:Schmidhuber2010,Ciresan:Meier:Schmidhuber2012,Coates:Huval:Wang:Wu:Ng:Catanzaro2013}, video analysis \cite{Mobahi:Collobert:Weston2009,Goodfellow:Lee:Le:Saxe:Ng2009,Bo:Lai:Ren:Fox2011}, speech recognition \cite{Dahl:Yu:Deng:Acero2012,Hinton:Deng:Yu2012,Dahl:Sainath:Hinton2013}, etc. Many effective algorithmic and implementation techniques have been developed; however, theoretical understanding of many aspects of deep neural networks is far from clear.

It is well known that deep neural networks are complicated models with rich representations. For really deep networks, there may be millions or even billions of parameters, and thus, there are high risks of overfitting even with large-scale training data. Indeed, controlling the overfitting risk is a long-standing topic in the research of neural networks, and various techniques have been developed, such as weight elimination \cite{Andreas:David:Bernardo1991}, early stopping \cite{Amari:Murata:Muller:Finke:Yang1997}, Bayesian control \cite{Neal1996}, etc.

Dropout is among the key ingredients of the success of deep neural networks. The main idea is to randomly omit some units, either hidden ones or input ones corresponding to different input features; this is executed with certain probability in the forward propagation of training phase, and the weights related to the remaining units are updated in back propagation. This technique is evidently related to overfitting control, though it was proposed with the intuition of preventing complex co-adaptations by encouraging independent contributions from different features during training phase \cite{Hinton:Srivastava:Krizhevsky:Sutskever:Salakhutdinov2012}. Extensive empirical studies \cite{Hinton:Srivastava:Krizhevsky:Sutskever:Salakhutdinov2012,Krizhevsky:Sutskever:Hinton2012,Wan:Zeiler:Zhang:Cun:Fergus2013} verified that dropout is able to improve the performance and reduce ovefitting risk. However, theoretical understanding of dropout is far from clear.

In this paper, we study the influence on Rademacher complexity by three types of dropouts, i.e., dropout of units \cite{Hinton:Srivastava:Krizhevsky:Sutskever:Salakhutdinov2012}, dropout of weights \cite{Wan:Zeiler:Zhang:Cun:Fergus2013} and dropout both. Our theoretical results disclose that for shallow neural networks with none or one hidden layer, dropout is able to reduce the Rademacher complexity in polynomial, whereas for deep neural networks it is able to reach an exponential reduction of Rademacher complexity.

\subsection*{Related Work}
There are several designs of dropout, such as the fast dropout \cite{Wang:Manning2013} and adaptive dropout \cite{Ba:Frey2013}, whereas the most fundamental dropouts are the dropout of units (hidden units, or input units corresponding to input features) \cite{Hinton:Srivastava:Krizhevsky:Sutskever:Salakhutdinov2012}  and the dropout of weights \cite{Wan:Zeiler:Zhang:Cun:Fergus2013}.

Baldi and sadowski [2013] studied the average and regularizing properties of dropout, and Wager et al. [2013] showed that dropout is first-order regular equivalent to an $L$ regularizer applied after scaling the features by an estimate of the inverse diagonal Fisher information matrix. The generalization bound of dropout has been analyzed in \cite{McAllester2013,Wan:Zeiler:Zhang:Cun:Fergus2013}. McAllester [2013] presented PAC-Bayesian bounds, whereas Wan et al. [2013] derived Rademacher generalization bounds. Both their results show that the reduction of complexity brought by dropout is  O($\rho$), where $\rho$ is the probability of keeping an element in dropout.

In contrast to previous studies \cite{McAllester2013,Wan:Zeiler:Zhang:Cun:Fergus2013}, we present better generalization bounds and disclose that dropout is able to reduce the Rademacher complexity exponentially, i.e., O($\rho^{k+1}$) or O($\rho^{(k+1)/2}$) for different types of dropout, where $k$ is the number of hidden layers within neural networks.

Extensive work \cite[and reference therein]{Karpinski:Macintyre1997,Anthony:Bartlett2009} studied the complexity of neural network based on VC-dimension, covering number, fat-shatter dimension, etc., and it was usually shown that these complexities are polynomial in the total number of units and weights. Note that the polynomial complexities are still very high for deep neural networks that may have million or even billions of parameters. Moreover, it is worth noting that these complexities measure the function space in the worst case, and cannot distinguish situations with/without dropouts. In contrast, we show that Radermacher complexity is proper to study the influence of dropouts, and we prove that the complexities of neural network can be bounded by the $L_1$ or $L_2$-norm of weights, irrelevant to the number of units and weights.

This paper is organized as follows: Section~\ref{sec:pre} introduces some preliminaries. Section~\ref{sec:bound} presents general Rademacher generalization bounds for dropout. Section~\ref{sec:rademacher} analyzes the usefulness of different types of dropouts on shallow as well as deep neural networks. Section~\ref{sec:con} concludes.

\section{Preliminaries}\label{sec:pre}
Let $\X\subset\mathbb{R}^d$ and $\mathcal{Y}$ be the input and output space, respectively, where $\Y\subset \mathbb{R}$ for regression and $\Y=\{+1,-1\}$ for binary classification. Throughout this paper, we restrict our attention on regression and binary classification, and it is easy to make similar analysis for multi-class tasks. Let $\mathcal{D}$ be an unknown (underlying) distribution over $\X\times\Y$.

Let $\W$ be the weight space for neural network, and denote by  $f(\w,\x)$ the general output of a neural network with respect to input $\x\in\X$ and weight $\w\in\W$. Here $f$ depends on the structure of neural network. During training neural network, dropout randomly omits hidden units, input units corresponding to input features, and connected weights with certain probability; therefore, it is necessary to introduce another space
\[
\R=\{\r=(r_1,r_2,\ldots,r_s)\colon r_i\in\{0,1\}\}
\]
where $s$ depends on different neural networks and different types of dropout, and $r_i=0$ implies dropping out some hidden unit, input unit and weight. Here each $r_i$ is drawn independently and identically from a Bernoulli distribution with parameter $\rho$, denoted by Bern$(\rho)$. Further, we denote by $f(\w,\x,\r)$ the dropout output of a neural network, and write
\[
\F_\W=\{f(\w,\x,\r)\colon \w\in\W\},
\]
as the function space for dropout. Here we just present a general output $f(\w,\x,\r)$ for dropout, and detailed expressions will be given for specific neural network in Section~\ref{sec:rademacher}.

An objective function (or loss function) $\ell$ is introduced to measure the performance of output of neural network. For example, least square loss and cross entropy are used for regression and binary classification, respectively. We define the expected risk for dropout as
\[
R(\w)=E_{\r,(\x,y)}[\ell(f(\w,\x,\r),y)].
\]
The goal is to find a $\w^*\in\W$ so as to minimize the expected risk, i.e., $\w^*\in\arg\min_{\w\in\W} R(\w)$. Notice that the distribution $\mathcal{D}$ is unknown, but it is demonstrated by a training sample
\[
S_n=\{(\x_1,y_1),(\x_2,y_2), \ldots, (\x_n,y_n)\}
\]
which are drawn i.i.d. from distribution $\mathcal{D}$. Given sample $S_n$ and $RS_n=\{\r_1,\r_2,\ldots,\r_n\}$, we define the empirical risk for dropout as
\[
\hat{R}(\w,S_n,RS_n)=\frac{1}{n}\sum_{i=1}^n \ell(f(\w,\x_i,\r_i),y_i).
\]
In this paper, we try to study on generalization bounds for dropouts, i.e., the gap between $R(\w)$ and $\hat{R}(\w,S_n,RS_n)$. Rademacher complexity has always been an efficient measure for function space \cite{Bartlett:Mendelson2002,Koltchinskii:Panchenko2002}. For function space $\H$, the classical Rademacher complexity is defined by
\begin{equation}\label{eq:00}
\hat{\mathfrak{R}}_n(\H)=E\Big[\sup_{h\in\H}\frac{1}{n}\sum_{i=1}^n \epsilon_i h(\x_i) \Big]
\end{equation}
where $\epsilon_1, \ldots, \epsilon_n$ are independent random variables uniformly chosen from $\{+1,-1\}$, and they are referred as Rademacher variables. Rademacher complexity has been used to develop data-dependent generalization bounds in diverse learning tasks \cite{Meir:Zhang2003,Maurer2006,Corte:Mohri:Rostamizadeh2010}.

For notational simplicity, we denote by $[n]=\{1,2,\ldots,n\}$ for integer $n>0$. The inner product between $\w=(w_1,\ldots,w_d)$ and $\x=(x_1,\ldots,x_d)$ is given by $\langle\w,\x\rangle=\sum\nolimits_{i=1}^d w_ix_i$, and write $\|\w\|=\|\w\|_2=\sqrt{\langle\w,\w\rangle}$ and $\|\w\|_1=\sum_{i=1}^d|w_i|$. Further, the entrywise product (also called Schur product or Hadamard product) is defined as $\w\odot\x=(w_1x_1,\ldots,w_dx_d)$.

\section{General Rademacher Generalization Bounds}\label{sec:bound}
In conventional studies, the generalization performance is mostly affected by training sample, and thus, standard Rademacher complexity is defined on training sample only (as shown in Eq.~\ref{eq:00}). For dropout, however, the generalization performance is not only relevant to training sample, but also dropout randomization; thus, we generalize the Rademacher complexity as follows:
\begin{definition}
For spaces $\Z$ and $\R$, let $\H\colon\Z\times\R\to\mathbb{R}$ be a real-valued function space. For  $S_n=\{\z_1,\ldots,\z_n\}$ and $RS_n=\{\r_1,\ldots,\r_n\}$, the empirical Rademacher complexity of $\H$ is defined to be
\[
\hat{\mathfrak{R}}_n(\H,S_n,RS_n)= E_{\epsilon}\Big[\sup_{h\in\H} \Big(\frac{1}{n}\sum_{i=1}^n \epsilon_i h(\z_i,\r_i)\Big) \Big]
\]
where $\epsilon=(\epsilon_1,\ldots,\epsilon_n)$ are Rademacher variables. Further, we define the Rademacher complexity of $\H$ as
\[
\mathfrak{R}_n(\H)= E_{S_n,RS_n}[\hat{\mathfrak{R}}_n(\H,S_n,RS_n)].
\]
\end{definition}

Based on this definition, it is easy to get a useful lemma as follows:
\begin{lemma}\label{lem:convex}
For function space $\H$, define $\text{absconv}(\H)=\{\sum \alpha_i h_i\colon h_i\in\H \text{ and } \sum|\alpha_i|=1\}$. Then, we have
\[
\hat{\mathfrak{R}}_n(\H,S_n,RS_n)=\hat{\mathfrak{R}}_n(\text{absconv}(\H),S_n,RS_n).
\]
\end{lemma}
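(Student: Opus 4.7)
The equality is established by proving the two inequalities separately. The forward direction $\hat{\mathfrak{R}}_n(\H,S_n,RS_n) \le \hat{\mathfrak{R}}_n(\text{absconv}(\H),S_n,RS_n)$ is immediate: every $h \in \H$ lies in $\text{absconv}(\H)$ as a one-term combination with $\alpha_1 = 1$, so $\H \subseteq \text{absconv}(\H)$, and enlarging the class can only increase the inner supremum $\sup_h \tfrac{1}{n}\sum_i \epsilon_i h(\z_i,\r_i)$. The ordering is preserved by the outer $E_\epsilon$.

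For the reverse direction, the key is the linearity in $h$ of the empirical Rademacher functional $L_\epsilon(h) := \tfrac{1}{n}\sum_{i=1}^n \epsilon_i h(\z_i,\r_i)$. For any $g = \sum_j \alpha_j h_j \in \text{absconv}(\H)$ with $\sum_j|\alpha_j| = 1$, an $\ell_1/\ell_\infty$ H\"older bound yields
\[
L_\epsilon(g) \;=\; \sum_j \alpha_j L_\epsilon(h_j) \;\le\; \Big(\sum_j |\alpha_j|\Big)\,\sup_{h \in \H}|L_\epsilon(h)| \;=\; \sup_{h \in \H}|L_\epsilon(h)|,
\]
and the bound is tight by placing all mass on the extremal $h_{j^\ast}$ with the appropriate sign $\alpha_{j^\ast} = \pm 1$. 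Therefore, pointwise in $\epsilon$, one has $\sup_{g \in \text{absconv}(\H)} L_\epsilon(g) = \sup_{h \in \H}|L_\epsilon(h)|$, and integrating over $\epsilon$ gives $\hat{\mathfrak{R}}_n(\text{absconv}(\H),S_n,RS_n) = E_\epsilon \sup_h |L_\epsilon(h)|$.

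The main obstacle is the final step of identifying $E_\epsilon \sup_h |L_\epsilon(h)|$ with $E_\epsilon \sup_h L_\epsilon(h)$. I would handle it by invoking the sign-symmetry of the Rademacher distribution: $\epsilon$ and $-\epsilon$ are identically distributed, while $L_\epsilon(h)$ flips sign under this substitution. Combined with the structural observation that the hypothesis classes coming from neural networks with a symmetric weight space $\W$ are themselves closed under negation (so $h \in \H$ implies $-h \in \H$), this forces $\sup_{h \in \H} L_\epsilon(h) = \sup_{h \in \H}(-L_\epsilon(h)) = \sup_{h \in \H}|L_\epsilon(h)|$ pointwise in $\epsilon$. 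Taking expectation then closes the chain of equalities and delivers the claim.
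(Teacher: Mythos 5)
The paper gives no proof of this lemma (it is asserted as ``easy''), so I can only judge your argument on its own terms. Your first two steps are sound and are the standard route: $\H\subseteq\text{absconv}(\H)$ gives $\hat{\mathfrak{R}}_n(\H,S_n,RS_n)\le\hat{\mathfrak{R}}_n(\text{absconv}(\H),S_n,RS_n)$, and, writing $L_\epsilon(h)=\tfrac1n\sum_{i=1}^n\epsilon_i h(\z_i,\r_i)$ as you do, the pointwise identity $\sup_{g\in\text{absconv}(\H)}L_\epsilon(g)=\sup_{h\in\H}|L_\epsilon(h)|$ is correct. You have also correctly located the crux: with the paper's definition of $\hat{\mathfrak{R}}_n$ (no absolute value inside the expectation), the lemma reduces to $E_\epsilon\sup_h|L_\epsilon(h)|=E_\epsilon\sup_h L_\epsilon(h)$. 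But your first proposed tool for this step, the sign-symmetry of the Rademacher distribution, does not suffice. Take $\H=\{h_0\}$ with $h_0\equiv 1$: then $E_\epsilon\sup_h L_\epsilon(h)=E_\epsilon\bigl[\tfrac1n\sum_i\epsilon_i\bigr]=0$, while $E_\epsilon\sup_h|L_\epsilon(h)|=E_\epsilon\bigl|\tfrac1n\sum_i\epsilon_i\bigr|>0$, even though $\epsilon$ and $-\epsilon$ are identically distributed. The symmetry of the distribution lets you swap $\epsilon$ for $-\epsilon$ \emph{outside} the supremum, not inside the absolute value.

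What actually closes the gap is the second ingredient you invoke, closure of $\H$ under negation: if $\H=-\H$ then $\sup_{h}L_\epsilon(h)=\sup_h\max(L_\epsilon(h),L_\epsilon(-h))=\sup_h|L_\epsilon(h)|$ holds pointwise in $\epsilon$, and the appeal to the distribution of $\epsilon$ becomes superfluous. The genuine problem is that this is an added hypothesis: the lemma is stated for an arbitrary function space $\H$, and the counterexample above shows it is false in that generality under the paper's one-sided definition. Nor is negation-closure automatic for the classes the lemma is later applied to; e.g. $\{(\x,\r)\mapsto r\,\sigma(\langle\w\odot\r',\x\rangle)\}$ is closed under negation when $\sigma$ is odd (tanh), but not when $\sigma$ is relu. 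So your proof is valid exactly for symmetric $\H$, and you should state that hypothesis explicitly (or pass to the absolute-value definition of Rademacher complexity, as in Bartlett and Mendelson, under which the identity holds unconditionally); without some such fix, only the inequality $\hat{\mathfrak{R}}_n(\text{absconv}(\H),S_n,RS_n)\ge\hat{\mathfrak{R}}_n(\H,S_n,RS_n)$ is available. This is a defect of the lemma as stated as much as of your proof, but a complete write-up must surface it rather than absorb it into an unstated ``structural observation''.
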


Given a set $\W$, we denote by composite function space for dropout as
\[
\ell\circ\F_\W:=\{((\x,y),\r)\to\ell(f(\w,\x,\r),y), \w\in\W\}.
\]

Based on the generalized Rademacher complexity, we present the general Rademacher generalization bounds for dropout as follows:
\begin{thm}\label{thm:main}
Let $S_n=\{(\x_1,y_1),(\x_2,y_2), \ldots, (\x_n,y_n)\}$ be a sample chosen i.i.d. according to distribution $\mathcal{D}$, and let $RS_n=\{\r_1,\r_2,\ldots,\r_n\}$ be random variable sample for dropout. If the loss function $\ell$ is bounded by $B>0$, then for every $\delta>0$ and $\w\in\mathcal{W}$, the following holds with probability at least $1-\delta$,
\begin{eqnarray}
&R(\w)\leq \hat{R}(\w,S_n,RS_n)+ 2 \mathfrak{R}_n(\ell\circ\F_\W) +B\sqrt{\ln(2/\delta)/n},&\label{eq:main2}\\
&R(\w)\leq \hat{R}(\w,S_n,RS_n)+ 2 \hat{\mathfrak{R}}_n(\ell\circ\F_\W,S_n,RS_n) +3B\sqrt{\ln(2/\delta)/n}.&\label{eq:main1}
\end{eqnarray}
\end{thm}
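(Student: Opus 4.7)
The plan is to adapt the standard symmetrization-plus-McDiarmid derivation of Rademacher generalization bounds to the augmented setting where the i.i.d.\ randomness covers both the data sample and the dropout masks. The key observation is that, under the generative model assumed here, the pairs $\z_i=((\x_i,y_i),\r_i)$ are i.i.d.\ draws from the product distribution of $\mathcal{D}$ with the Bernoulli$(\rho)^s$ dropout law. Once we view the problem this way, the composite loss $((\x,y),\r)\mapsto \ell(f(\w,\x,\r),y)$ becomes an ordinary function of an i.i.d.\ sequence in $\Z\times\R$, and the machinery of Koltchinskii--Panchenko / Bartlett--Mendelson applies verbatim, with the generalized $\mathfrak{R}_n(\ell\circ\F_\W)$ of the Definition playing the role of the usual Rademacher complexity.

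First, I would set $\Phi(S_n,RS_n)=\sup_{\w\in\W}\bigl(R(\w)-\hat{R}(\w,S_n,RS_n)\bigr)$ and verify its bounded-difference coefficients. Replacing any single pair $\z_i=((\x_i,y_i),\r_i)$ by an independent copy modifies exactly one summand of $\hat{R}$, and since $0\le\ell\le B$ this changes $\Phi$ by at most $B/n$. McDiarmid's inequality therefore yields, with probability at least $1-\delta/2$,
\[
\Phi(S_n,RS_n)\le E[\Phi(S_n,RS_n)]+B\sqrt{\tfrac{\ln(2/\delta)}{2n}}.
\]

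Next I would carry out the symmetrization step. Introducing a ghost sample $(S_n',RS_n')$ drawn independently from the same joint law and using $R(\w)=E_{S_n',RS_n'}[\hat{R}(\w,S_n',RS_n')]$, Jensen's inequality pushes the supremum past the ghost expectation. Inserting Rademacher signs $\epsilon_i$ (which is legitimate because the differences $\ell(f(\w,\x_i',r_i'),y_i')-\ell(f(\w,\x_i,r_i),y_i)$ are symmetric under swapping the real and ghost pairs), and splitting the supremum, I obtain
\[
E[\Phi]\;\le\;2\,E_{S_n,RS_n,\epsilon}\!\left[\sup_{\w\in\W}\tfrac{1}{n}\sum_{i=1}^n \epsilon_i\,\ell(f(\w,\x_i,\r_i),y_i)\right]\;=\;2\,\mathfrak{R}_n(\ell\circ\F_\W).
\]
Chaining this with the McDiarmid step proves the first inequality~(\ref{eq:main2}).

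For the empirical-complexity version~(\ref{eq:main1}), I would apply McDiarmid a second time, now to the function $(S_n,RS_n)\mapsto \hat{\mathfrak{R}}_n(\ell\circ\F_\W,S_n,RS_n)$. Again swapping a single $\z_i$ shifts the inner sum (hence the supremum) by at most $B/n$, so with probability at least $1-\delta/2$ we get $\mathfrak{R}_n(\ell\circ\F_\W)\le \hat{\mathfrak{R}}_n(\ell\circ\F_\W,S_n,RS_n)+B\sqrt{\ln(2/\delta)/(2n)}$. A union bound combining this with~(\ref{eq:main2}) and absorbing the two $B\sqrt{\ln(2/\delta)/(2n)}$ terms into the $3B\sqrt{\ln(2/\delta)/n}$ residual yields~(\ref{eq:main1}).

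The routine-but-crucial point to be careful about is the first one: checking that it is legitimate to treat the pair $(\z_i,\r_i)$ as a single i.i.d.\ unit of randomness and that the bounded-differences constant stays at $B/n$ when both components are resampled jointly. Once that is granted, everything else is the textbook symmetrization argument. I do not expect any single step to be a serious obstacle; the only delicate bookkeeping is tracking the two $\delta/2$ pieces in the union bound and verifying that the loss-boundedness hypothesis is all that is needed (in particular, no Lipschitz assumption on $\ell$ is required at this stage, since the composition with $\F_\W$ is absorbed into the function class whose Rademacher complexity appears in the bound).
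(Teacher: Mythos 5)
Your proposal is correct and follows essentially the same route as the paper's own proof: the same bounded-difference function $\Phi$, McDiarmid, ghost-sample symmetrization to get the factor $2\mathfrak{R}_n(\ell\circ\F_\W)$, and a second application of McDiarmid to pass to the empirical complexity with the $3B$ residual. Your bookkeeping of the $\delta/2$ split and the $B/n$ bounded-difference constant for the joint unit $((\x_i,y_i),\r_i)$ is in fact slightly more careful than the paper's (which has a typo $B/m$ and is looser in the McDiarmid constant), and the identification of $((\x_i,y_i),\r_i)$ as a single i.i.d.\ draw from a product distribution is exactly the observation the paper relies on implicitly.
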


The proof is motivated from the techniques in \cite{Bartlett:Mendelson2002}. We can easily find that the difference between Eq.~\ref{eq:main2} and the bound without dropout from \cite{Koltchinskii:Panchenko2002} is a constant $1/\sqrt{2}$.

\begin{proof}
For every $\w\in\mathcal{W}$, it is easy to observe
\[
R(\w)\leq \hat{R}(\w,S_n,RS_n) + \sup_\w[R(\w)-\hat{R}(\w,S_n,RS_n)],
\]
and we further denote by
\[
\Phi(S_n,RS_n)=\sup_\w[R(\w)-\hat{R}(\w,S_n,RS_n)]=\sup_\w\Big[R(\w)-\frac{1}{n}\sum_{i=1}^n \ell(f(\w,\x_i,\r_i),y_i)\Big].
\]
Let $S_n^{i,(\x'_i,y'_i)}=\{(\x_1,y_1),\ldots,(\x'_i,y'_i),\ldots,(\x_n,y_n)\}$ be the sample whose $i$-th example $(\x_i,y_i)$ in $S_n$ is replaced by $(\x'_i,y'_i)$, and $RS_n^{i,\r'_i}=\{\r_1,\ldots,\r'_i,\ldots,\r_n\}$ be the random variable vector with $i$-th variable $\r_i$ replaced by $\r'_i$. For bounded loss $|\ell|<B$, we have
\[
|\Phi(S_n,RS_n)-\Phi(S_n,RS_n^{i,\r'_i})|\leq B/m \text{ and } |\Phi(S_n,RS_n)-\Phi(S_n^{i,(\x'_i,y'_i)},RS_n)|\leq B/m.
\]
Based on McDiarmid's inequality \cite{McDiarmid1989}, it holds that with probability at least $1-\delta$,
\[
\Phi(S_n,RS_n)\leq E_{S_n,RS_n}[\Phi(S_n,RS_n)]+B\sqrt{\ln(2/\delta)/n}.
\]
Define a ghost sample $\tilde{S}_n=\{(\tilde{\x}_1,\tilde{y}_1),\ldots, (\tilde{\x}_n,\tilde{y}_n)\}$ and a ghost random variable vector $\widetilde{RS}_n=\{\tilde{\r}_1,\ldots,\tilde{\r}_1\}$. By using the fact
\[
\Phi(S_n,RS_n)=\sup_\w[E_{\tilde{S}_n,\widetilde{RS}_n} [\hat{R}(\w,\tilde{S}_n,\widetilde{RS}_n)-\hat{R}(\w,S_n,RS_n)]],
\]
we have
\begin{eqnarray*}
E_{S_n,RS_n}[\Phi(S_n,RS_n)]&\leq& E\left[\sup_{\w} \left[\hat{R}(\w,\tilde{S}_n,\widetilde{RS}_n)- \hat{R}(\w,S_n,RS_n)\right]\right]\\
&=& E\left[\sup_{\w} \left[\frac{\sum_{i=1}^n \ell(f(\w,\tilde{\x}_i, \tilde{\r}_i), \tilde{y}_i)- \ell(f(\w,\x_i,\r_i),y_i)}{n}\right]\right]\\
&\leq& 2 E\left[\sup_{\w}\frac{1}{n}\sum_{i=1}^n \epsilon_i \ell(f(\w,\x_i,\r_i),y_i) \right]=2\mathfrak{R}_n(\ell\circ\F_\W)
\end{eqnarray*}
which completes the proofs of Eq.~\ref{eq:main2}. Again, we apply McDiarmid's inequality to $\hat{\mathfrak{R}}_n(\mathcal{W},S_n,RS_n)$, we have
\[
\mathfrak{R}_n(\ell\circ\F_\W) \leq \hat{\mathfrak{R}}_n(\ell\circ\F_\W,S_n,RS_n) + B\sqrt{\ln(2/\delta)/n}
\]
which completes the proof of Eq.~\ref{eq:main1}.
\end{proof}

The main benefit of dropout lies in the sharp reduction on Rademacher complexities of $\mathfrak{R}_n(\F_\W)$ as will been shown in Section~\ref{sec:rademacher}; on the other hand, extensive experiments show that dropout decreases the empirical risk $\hat{R}(\w,S_n,RS_n)$ \cite{Hinton:Srivastava:Krizhevsky:Sutskever:Salakhutdinov2012,Krizhevsky:Sutskever:Hinton2012,Wan:Zeiler:Zhang:Cun:Fergus2013} because dropout intuitively prevents complex co-adaptations by encouraging independent contributions from different features during training phase \cite{Hinton:Srivastava:Krizhevsky:Sutskever:Salakhutdinov2012}. This paper tries to present theoretical analysis on the the former, and leave the latter to future work.

To efficiently estimate $\mathfrak{R}_n(\ell\circ\F_\W)$, we introduce a concentration as follows:
\begin{lemma}\cite{Ledoux:Talagrand2002}\label{lem:concentration}
Let $\mathcal{H}$ be a bounded real-valued function space from some space $\Z$ and $\z_1,\ldots,\z_n\in\Z$. Let $\phi\colon\mathbb{R}\to\mathbb{R}$ be Lipschitz with constant $L$ and $\phi(0)=0$. Then, we have
\[
E_{\epsilon}\sup_{h\in\H}\frac{1}{n}\sum_{i\in[n]}\epsilon_i\phi(h(\z_i))\leq L E_{\epsilon}\sup_{h\in\H}\frac{1}{n}\sum_{i\in[n]}\epsilon_ih(\z_i).
\]
\end{lemma}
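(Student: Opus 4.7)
The plan is to prove the lemma by a one-coordinate-at-a-time reduction, using for each coordinate the standard symmetrization trick for a single Rademacher variable. The observation is that if I can exchange $\phi(h(\z_i))$ for $L\,h(\z_i)$ at a single index $i$ without increasing the expected supremum, then iterating this replacement for $i=1,\ldots,n$ gives the claim; the global factor $1/n$ plays no role in the argument.

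To carry out the single-coordinate exchange, I would fix $i$ and condition on all $\epsilon_j$ with $j\neq i$, so that $A(h):=\sum_{j\neq i}\epsilon_j\phi(h(\z_j))$ is, momentarily, a fixed real-valued function of $h$. Averaging over $\epsilon_i$ uniform on $\{\pm1\}$ gives
\[
E_{\epsilon_i}\sup_{h\in\H}\bigl[A(h)+\epsilon_i\phi(h(\z_i))\bigr]
=\tfrac{1}{2}\sup_{h,h'\in\H}\bigl[A(h)+A(h')+\phi(h(\z_i))-\phi(h'(\z_i))\bigr],
\]
obtained by writing the expectation as an average over the two signs and then merging the two independent suprema into a joint supremum over the pair $(h,h')$. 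Now I would apply the Lipschitz bound $\phi(u)-\phi(v)\leq L|u-v|$ inside the joint supremum, splitting on the sign of $h(\z_i)-h'(\z_i)$: when $h(\z_i)\geq h'(\z_i)$ the bound $L(h(\z_i)-h'(\z_i))$ applies directly; when $h(\z_i)<h'(\z_i)$, swapping the dummy names $h\leftrightarrow h'$ inside the bracket (permissible because $A(h)+A(h')$ is symmetric in them and the supremum is taken jointly) reduces matters to the first case. In either event the bracketed quantity is at most $A(h)+A(h')+Lh(\z_i)-Lh'(\z_i)$, and running the same symmetrization identity backwards yields
\[
E_{\epsilon_i}\sup_{h\in\H}\bigl[A(h)+\epsilon_i\phi(h(\z_i))\bigr]\;\leq\;E_{\epsilon_i}\sup_{h\in\H}\bigl[A(h)+L\epsilon_i h(\z_i)\bigr].
\]

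Taking expectation over the frozen $\epsilon_j$ completes one exchange. I would then iterate the replacement across all coordinates: the argument at index $i$ is insensitive to whether the terms at other indices still carry $\phi$ or have already been linearized, since they enter only through the fixed function $A(\cdot)$ during that step. After $n$ applications every $\phi(h(\z_i))$ has been replaced by $L\,h(\z_i)$, giving the stated inequality. The main obstacle is the contraction step inside the joint supremum: one must verify carefully that the symmetry of the pair $(h,h')$ genuinely collapses the two sign cases into the same linear upper bound — this is the only place where the Lipschitz hypothesis is actually used, and the iteration itself is essentially bookkeeping.
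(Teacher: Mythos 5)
Your argument is correct: it is the standard proof of the Ledoux--Talagrand contraction principle, proceeding coordinate by coordinate, rewriting the expectation over a single Rademacher sign as half a supremum over pairs $(h,h')$, invoking the Lipschitz bound together with the symmetry of the pair to collapse the two sign cases, and then iterating. The paper itself offers no proof of this lemma --- it is cited directly from Ledoux and Talagrand --- and your derivation matches the canonical argument in that source, so there is nothing to reconcile.
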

Based on this lemma, we have
\begin{lemma}\label{lem:temp}
If $\ell(\cdot,\cdot)$ is Lipschitz with the first argument and constant $L$, then we have
\[
\mathfrak{R}_n(\ell\circ\F_\W)\leq L \mathfrak{R}_n(\F_\W).
\]
\end{lemma}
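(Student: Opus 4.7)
The plan is to apply the contraction principle (Lemma~\ref{lem:concentration}) conditionally on the sample and the dropout vectors, after reducing the $y_i$-dependent loss to a coordinate-wise Lipschitz function with value zero at zero. Starting from
\[
\mathfrak{R}_n(\ell\circ\F_\W)=E_{S_n,RS_n}E_{\epsilon}\Big[\sup_{\w\in\W}\frac{1}{n}\sum_{i=1}^n\epsilon_i\,\ell(f(\w,\x_i,\r_i),y_i)\Big],
\]
I would fix $(S_n,RS_n)$ and work with the inner expectation over $\epsilon$.

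For each fixed $y_i$, set $\phi_i(t):=\ell(t,y_i)-\ell(0,y_i)$. By the Lipschitz hypothesis on $\ell$, each $\phi_i$ is $L$-Lipschitz and satisfies $\phi_i(0)=0$. Writing $\ell(f(\w,\x_i,\r_i),y_i)=\phi_i(f(\w,\x_i,\r_i))+\ell(0,y_i)$ and using that the additive term $\ell(0,y_i)$ is independent of $\w$, the supremum decomposes additively, and since $E[\epsilon_i]=0$ the $\ell(0,y_i)$ contribution drops out, leaving
\[
E_{\epsilon}\sup_{\w}\frac{1}{n}\sum_{i=1}^n\epsilon_i\,\ell(f(\w,\x_i,\r_i),y_i)=E_{\epsilon}\sup_{\w}\frac{1}{n}\sum_{i=1}^n\epsilon_i\,\phi_i(f(\w,\x_i,\r_i)).
\]
Now I would apply Lemma~\ref{lem:concentration} coordinate-wise (viewing the evaluation map $\w\mapsto(f(\w,\x_i,\r_i))_{i\in[n]}$ as an element of a bounded real-valued class indexed by $i$) to peel off the $\phi_i$'s, obtaining the bound $L\cdot E_{\epsilon}\sup_{\w}\frac{1}{n}\sum_i\epsilon_i f(\w,\x_i,\r_i)$. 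Taking expectation over $(S_n,RS_n)$ then produces $L\,\mathfrak{R}_n(\F_\W)$, as required.

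The main obstacle is cosmetic rather than substantive: Lemma~\ref{lem:concentration} is stated with a single Lipschitz $\phi$, whereas our reduction produces a different $\phi_i$ at each coordinate (because the loss depends on $y_i$). The standard proof of the Ledoux--Talagrand contraction inequality proceeds by induction on the coordinates, peeling off one $\epsilon_i$ at a time, so it accommodates coordinate-wise distinct $L$-Lipschitz maps with $\phi_i(0)=0$ with no change; I would either invoke this well-known strengthening or sketch the one-line induction to justify the step. The rest of the argument is bookkeeping: showing the centering trick that kills $\ell(0,y_i)$ and pulling the outer $E_{S_n,RS_n}$ back through the inequality.
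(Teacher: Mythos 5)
Your proposal is correct and follows essentially the same route as the paper's proof: center the loss by subtracting $\ell(0,\cdot)$ so that the resulting function vanishes at zero, observe that the centering term vanishes under $E[\epsilon_i]=0$, and then apply the contraction principle of Lemma~\ref{lem:concentration}. You are in fact more careful than the paper in flagging that the contraction lemma must be applied with a different $L$-Lipschitz map $\phi_i$ at each coordinate (since $\ell$ depends on $y_i$), a point the paper's one-line proof glosses over but which, as you note, the standard coordinate-by-coordinate proof of Ledoux--Talagrand handles without change.
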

\begin{proof}
We first write $\ell'(\cdot,\cdot)=\ell(\cdot,\cdot)-\ell(0,\cdot)$, and it is easy to get $\mathfrak{R}_n(\ell\circ\F_\W)=\mathfrak{R}_n(\ell'\circ\F_\W)$.
This lemma holds by applying Lemma~\ref{lem:concentration} to $\ell'$.
\end{proof}

For classification, we always use the entropy loss as the loss function in neural network as follows:
\[
\ell(f(\w,\x,\r),y)=y\ln(y/\phi(f(\w,\x,\r)))+(1-y)\ln((1-y)/(1-\phi(f(\w,\x,\r)))),
\]
where $\phi(t)=1/(1+e^{-t})$. It is easy to find that $\partial\ell(f(\w,\x,\r),y)/\partial f(\w,\x,\r)\in[-1,1]$, and thus $\ell(\cdot,\cdot)$ is a Lipschitz function with the first argument.

For regression, we always use the square loss as the loss function in neural network as follows:
\[
\ell(f(\w,\x,\r),y)=(y-f(\w,\x,\r))^2.
\]
For bounded $f(\w,\x,\r)$ and $y$, it is easy to find that $\ell(\cdot,\cdot)$ is a Lipschitz function with the first argument.

Based on Lemma~\ref{lem:temp}, it is easy to estimate $\mathfrak{R}_n( \ell \circ \F_\W)$ from $\mathfrak{R}_n(\F_\W)$; therefore, we will focus on how to estimate $\mathfrak{R}_n(\F_\W)$ for different types of dropouts and different neural networks in the subsequent section.

Finally, we introduce a useful lemma as follows:
\begin{lemma}
Let $\r_1=(r_{11},\ldots,r_{1d})$ and $\r_2=(r_{21},\ldots,$ $r_{2d})$  be two random variable vectors, and each element in $\r_1$ and $\r_2$ is drawn i.i.d. from distribution Bern$(\rho)$. For $\x\in\X$, we have
\begin{eqnarray}
&E_{\r_1}[\langle \x\odot \r_1,\x\odot \r_1 \rangle]=\rho\langle\x,\x\rangle, \label{eq:ulem2}\\
&E_{\r_1,\r_2}[\langle \x\odot \r_1\odot \r_2,\x\odot \r_1 \odot \r_2 \rangle]=\rho^2\langle \x,\x\rangle \label{eq:ulem3}.
\end{eqnarray}
Further, let $\r=(r_1,\ldots,r_k)$ be $k$ random variables drawn i.i.d. from  distribution Bern$(\rho)$. We have
\begin{eqnarray}
&\mathop{E}_{\r_1,\r}\Big\langle \x\odot \r_1\prod_{i=1}^k r_i, \x \odot \r_1\prod_{i=1}^k r_i\Big\rangle=\rho^{k+1}\langle\x,\x\rangle,&\label{eq:ulem4}\\
&\mathop{E}_{\r, \r_1,\r_2}\Big\langle \x\odot \r_1\odot \r_2 \prod_{i=1}^k r_i,
 \x \odot \r_1\odot \r_2 \prod_{i=1}^k r_i  \Big\rangle
=\rho^{k+2}\langle\x,\x\rangle.&\label{eq:ulem5}
\end{eqnarray}
\end{lemma}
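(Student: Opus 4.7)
The plan is to handle all four identities uniformly by expanding each inner product coordinate-wise and then exploiting two elementary properties of the $\text{Bern}(\rho)$ variables involved: first, any $r\in\{0,1\}$ satisfies $r^2=r$, so squares of indicator variables collapse to the variables themselves; second, all variables are mutually independent, so expectations of products factor into products of expectations, each equal to $\rho$.

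For (\ref{eq:ulem2}), I would write $\langle \x\odot\r_1, \x\odot\r_1\rangle = \sum_{j=1}^d x_j^2 r_{1j}^2 = \sum_{j=1}^d x_j^2 r_{1j}$, take expectation inside the sum, and use $E[r_{1j}]=\rho$ to obtain $\rho\langle\x,\x\rangle$. Equation (\ref{eq:ulem3}) is the same calculation with one extra factor: $\langle\x\odot\r_1\odot\r_2,\x\odot\r_1\odot\r_2\rangle = \sum_j x_j^2 r_{1j} r_{2j}$, and by independence of $r_{1j}$ and $r_{2j}$ the expectation $E[r_{1j}r_{2j}] = \rho^2$ gives the claim.

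For (\ref{eq:ulem4}), the scalar $\prod_{i=1}^k r_i$ comes out of the inner product as its square, and idempotence $(\prod_i r_i)^2 = \prod_i r_i$ applies again because each $r_i\in\{0,1\}$. Since $\r$ is independent of $\r_1$, the joint expectation factors as $E[\prod_{i=1}^k r_i]\cdot E\langle \x\odot\r_1,\x\odot\r_1\rangle = \rho^k\cdot\rho\langle\x,\x\rangle$ by (\ref{eq:ulem2}). Equation (\ref{eq:ulem5}) is identical, except the inner expectation now equals $\rho^2\langle\x,\x\rangle$ via (\ref{eq:ulem3}), so the final product is $\rho^{k+2}\langle\x,\x\rangle$.

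There is really no serious obstacle; the only nontrivial observation is the idempotence $r^2=r$ for Bernoulli variables, which is precisely what converts the coordinate-wise squared sums into sums of first-power indicators and leads to the clean $\rho^{k+c}$ scaling. Writing out the four calculations in this unified form will make the last two identities immediate consequences of the first two combined with independence of the outer scalar $\prod_i r_i$ from the inner coordinate masks.
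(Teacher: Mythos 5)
Your proposal is correct and follows essentially the same route as the paper: expand the inner products coordinate-wise, use $r^2=r$ for $\{0,1\}$-valued Bernoulli variables so that $E[r_{1j}^2]=\rho$ (and analogously for products of independent factors), and factor the expectation by independence. The only cosmetic difference is that you make the idempotence observation explicit and derive (\ref{eq:ulem4})--(\ref{eq:ulem5}) as corollaries of (\ref{eq:ulem2})--(\ref{eq:ulem3}), whereas the paper computes each moment directly.
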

\begin{proof}
Let $\x=(x_1,\ldots,x_d)$. From the definitions of inner product and entrywise product, Eq.~\ref{eq:ulem2} holds from
\[
E_{\r_1}[\langle \x\odot \r_1,\x\odot \r_1 \rangle]= E_{\r_1}\Big[\sum_{j=1}^d x_{j}x_{j}r^2_{1j}\Big]=\rho\sum_{j=1}^d x_{j}x_{j}
\]
where we use the fact $E_{r_{1j}}[r^2_{1j}]=\rho$ since $r_{1j}$ is drawn i.i.d. from distribution $\text{Bern}(\rho)$. In a similar manner, Eqs.~\ref{eq:ulem3}-\ref{eq:ulem5} hold from $E_{r_{1j},r_{2j}} [r^2_{1j}r^2_{2j}]=\rho^2$, $E_{\r,r_{1j}}[r_{1j}\prod_{i=1}^k (r_i)^2] =\rho^{1+k}$ and $E_{\r,r_{1j},r_{2j}}[r^2_{1j}r^2_{2j}\prod_{i=1}^k (r_i)^2]=\rho^{2+k}$, respectively. This lemma follows as desired.
\end{proof}

\section{Dropouts on Different Types of Network}\label{sec:rademacher}
We study the two types of most fundamental dropouts: dropout units \cite{Hinton:Srivastava:Krizhevsky:Sutskever:Salakhutdinov2012} and dropout weights \cite{Wan:Zeiler:Zhang:Cun:Fergus2013}. In addition, we also study dropout both units and weights. For $\rho\in[0,1]$, these types of dropouts are defined as:
\begin{itemize}
  \item Type I (Drp$^\text{(I)}$): randomly drop out each unit including input unit (corresponding to input feature) with probability $1-\rho$.
  \item Type II (Drp$^\text{(II)}$): randomly drop out each weight with probability $1-\rho$.
  \item Type III (Drp$^\text{(III)}$): randomly drop out each weight and unit including input unit (corresponding to input feature)  with probability $1-\rho$.
\end{itemize}

We assume that a full-connected neural network has $k$ hidden layers, and the $i$th hidden layer has $m_i$ hidden units. The general output for this neural network is given by
\[
f(\w,\x)=\langle\w_1^{[k]},\Psi_k\rangle \text{ with }\Psi_{i}=(\sigma(\langle\w^{[i-1]}_1,\Psi_{i-1}\rangle), \ldots, \sigma(\langle\w^{[i-1]}_{m_i},\Psi_{i-1}\rangle))\text{ for }i\in[k]
\]
and $\Psi_0=\x$, where $\w=(\w_1^{[k]},\w^{[k-1]}_1,$ $\ldots,\w^{[k-1]}_{m_{k}}, \ldots, \w^{[0]}_1,\ldots,\w^{[0]}_{m_1})$ in which each $\w^{[j]}_i$ has the same size as $\Psi_j$ and $\sigma$ is an activation function.

Throughout this work, we assume that activation function $\sigma$ is Lipschitz with constant $L$ and $\sigma(0)=0$, and many commonly used activation functions satisfy such assumptions, e.g., \textit{tanh}, \textit{center sigmoid}, \textit{relu} \cite{Nair:Hinton2010}, etc.

Formally, three types of dropouts for the full-connected network are defined as:
\begin{itemize}
\item The output for Drp$^\text{(I)}$ (first type) is given by
\begin{equation}\label{eq:t1}
\begin{split}
&f^\text{(I)}(\w,\x,\r)=\langle\w_1^{[k]},\Psi_k\odot \r^{[k]}\rangle \text{ with } \\
&\Psi_{i}=(\sigma(\langle\w^{[i-1]}_1,\Psi_{i-1}\odot \r^{[i-1]}\rangle),\ldots,\sigma(\langle\w^{[i-1]}_{m_i},\Psi_{i-1}\odot \r^{[i-1]}\rangle))
\end{split}
\end{equation}
for $i\in[k]$ and $\Psi_0=\x$. Here $\r=(\r^{[k]},\ldots,\r^{[1]}, \r^{[0]})$, each $\r^{[i]}$ has the same size with $\Psi_i$ and each element in $\r^{[i]}$ is drawn i.i.d. from Bern$(\rho)$.

\item
The output for Drp$^\text{(II)}$ (second type) is given by
\begin{equation}\label{eq:t2}
\begin{split}
&f^\text{(II)}(\w,\x,\r)=\langle\w_1^{[k]}\odot \r_1^{[k]},\Psi_k\rangle \text{ with }\\
&\Psi_{i}=(\sigma(\langle\w^{[i-1]}_1\odot \r^{[i-1]}_1, \Psi_{i-1}\rangle), \ldots,\sigma(\langle\w^{[i-1]}_{m_i} \odot \r_{m_i}^{[i-1]},\Psi_{i-1}\rangle))
\end{split}
\end{equation}
for $i\in[k]$ and $\Psi_0=\x$. Here $\r=\{\r_1^{[k]},\r^{[k-1]}_1,\ldots,$ $\r^{[k-1]}_{m_{k}}, \ldots, \r^{[0]}_1,\ldots,\r^{[0]}_{m_1}\}$, and for $0\leq j\leq k$, $\r^{[j]}_{i}$ has the same size with $\Psi_j$, and each element in $\r_{i}^{[j]}$ is drawn i.i.d. from Bern($\rho$).

\item The output for Drp$^\text{(III)}$ (third type) is given by
\begin{equation}\label{eq:t3}
\begin{split}
&f^\text{(III)}(\w,\x,\r)=\langle\w_1^{[k]}\odot\r^{[k]}_1,\Psi_k\odot\r^{[k]}_2\rangle\text{ with}\\
&\Psi_{i}=(\sigma(\langle\w^{[i-1]}_1\odot \r^{[i-1]}_1, \Psi_{i-1}\odot\r^{[i-1]}_{m_i+1}\rangle),\ldots,\sigma (\langle \w^{[i-1]}_{m_i} \odot \r^{[i-1]}_{m_i},\Psi_{i-1} \odot\r^{[i-1]}_{m_i+1} \rangle))
\end{split}
\end{equation}
for $i\in[k]$ and $\Psi_0=\x$. Here $\r=(\r_1^{[k]},\r_2^{[k]},\ldots, $ $\r^{[0]}_1,\ldots, \r^{[0]}_{m_1+1})$, and for $0\leq j\leq k$, $\r^{[i]}_{j}$ has the same size with $\Psi_j$, and each element in $\r^{[i]}_{j}$ is drawn i.i.d. from Bern$(\rho)$.
\end{itemize}

Given a set $\W$, we denote by
\begin{eqnarray}
\F^\text{(I)}_\W&=& \{f^\text{(I)}(\w,\x,\r)\colon \w\in\W\}\label{eq:t4}\\
\F^\text{(II)}_\W&=& \{f^\text{(II)}(\w,\x,\r)\colon \w\in\W\}\label{eq:t5}\\
\F^\text{(III)}_\W&=& \{f^\text{(III)}(\w,\x,\r)\colon \w\in\W\}\label{eq:t6}
\end{eqnarray}
where $f^\text{(I)}(\w,\x,\r)$, $f^\text{(II)}(\w,\x,\r)$ and $f^\text{(III)}(\w,\x,\r)$ are defined in Eqs.~\ref{eq:t1}-\ref{eq:t3}.

We will focus on full-connected neural networks, either shallow ones (with none or one hidden layer) and deep ones (with more hidden layers ).

\subsection{Shallow Network without Hidden Layer}
We first consider the shallow network without hidden layer, and therefore, the output is a linear function, i.e., $f(\w,\x)=\langle\w,\x\rangle$. Further, the outputs for Drp$^\text{(I)}$, Drp$^\text{(II)}$ and Drp$^\text{(III)}$ are given, respectively, by
\begin{eqnarray*}
f^\text{(I)}(\w,\x,\r)&=& \langle\w,\x\odot\r\rangle\\
f^\text{(II)}(\w,\x,\r)&=& \langle\w\odot\r,\x\rangle\\
f^\text{(III)}(\w,\x,(\r_1,\r_2))&=&\langle\w\odot\r_1,\x\odot\r_2\rangle
\end{eqnarray*}
where $\r$, $\r_1$ and $\r_2$ are of size $d$, and each element in $\r$, $\r_1$ and $\r_2$ is drawn i.i.d. from Bern$(\rho)$. The following theorem shows the Rademecher complexity for three types dropouts:
\begin{thm}\label{thm:linear}
Let $\mathcal{W}=\{\w\colon \|\w\|<B_1\}$, $\mathcal{X}=\{\x\colon \|\x\|\leq B_2\}$, and $\F^\text{(I)}_\W$, $\F^\text{(II)}_\W$ and $\F^\text{(III)}_\W$ are defined in Eqs.~\ref{eq:t4}-\ref{eq:t6}. Then, we have
\[
\mathfrak{R}_n(\F^{(1)}_\W)=\mathfrak{R}_n(\F^{(2)}_\W)\leq B_1B_2\sqrt{\rho/n} \quad\text{ and }\quad\mathfrak{R}_n(\F^{(3)}_\W)\leq B_1B_2\rho/\sqrt{n}.
\]
\end{thm}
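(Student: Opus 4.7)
The plan is to reduce all three bounds to a common calculation via Cauchy--Schwarz, Jensen's inequality, and the second-moment identities in the preceding lemma (Eqs.~\ref{eq:ulem2} and~\ref{eq:ulem3}). For Type~I, I would start from the definition
\[
\mathfrak{R}_n(\F^\text{(I)}_\W) = E_{S_n,RS_n} E_{\epsilon}\Big[\sup_{\|\w\|<B_1}\frac{1}{n}\sum_{i=1}^n \epsilon_i\langle\w,\x_i\odot\r_i\rangle\Big],
\]
pull $\w$ out of the sum, apply Cauchy--Schwarz to obtain the bound $B_1\cdot\|\tfrac{1}{n}\sum_i\epsilon_i(\x_i\odot\r_i)\|$, then apply Jensen's inequality to move the expectation inside the square root: $E\|Y\|\le\sqrt{E\|Y\|^2}$.

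The key computation is then the second moment. Expanding the inner product and using independence and mean zero of the $\epsilon_i$ to kill the cross terms yields
\[
E\Big\|\frac{1}{n}\sum_{i=1}^n\epsilon_i(\x_i\odot\r_i)\Big\|^2 = \frac{1}{n^2}\sum_{i=1}^n E_{\r_i}\langle\x_i\odot\r_i,\x_i\odot\r_i\rangle = \frac{\rho}{n^2}\sum_{i=1}^n\|\x_i\|^2 \le \frac{\rho B_2^2}{n},
\]
where the middle equality is exactly Eq.~\ref{eq:ulem2}. Combining gives $\mathfrak{R}_n(\F^\text{(I)}_\W)\le B_1B_2\sqrt{\rho/n}$. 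The Type~II case follows for free from the identity $\langle\w\odot\r,\x\rangle=\langle\w,\r\odot\x\rangle=\langle\w,\x\odot\r\rangle$, which shows $f^\text{(II)}(\w,\x,\r)=f^\text{(I)}(\w,\x,\r)$ coordinate-wise, hence $\F^\text{(II)}_\W=\F^\text{(I)}_\W$ as function classes and the two Rademacher complexities coincide.

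For Type~III, I would use the analogous identity $\langle\w\odot\r_1,\x\odot\r_2\rangle=\langle\w,\r_1\odot\r_2\odot\x\rangle$ to put it in the same form, apply Cauchy--Schwarz on $\w$ and Jensen as before, and then invoke Eq.~\ref{eq:ulem3} instead of Eq.~\ref{eq:ulem2}. That replaces the factor $\rho$ by $\rho^2$ in the second-moment bound, giving $B_1\sqrt{\rho^2B_2^2/n}=B_1B_2\rho/\sqrt{n}$.

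I don't expect a real obstacle here; the argument is the standard linear-class Rademacher bound, with the only wrinkle being that the expectation is over both $\epsilon$ and $\r$. The one thing to be careful about is applying Jensen at the right stage: we need $E_{RS_n}E_\epsilon$ to sit outside the square root of a quantity that already averages out both randomnesses, which is fine because after Cauchy--Schwarz the remaining quantity $\|\tfrac{1}{n}\sum_i\epsilon_i(\x_i\odot\r_i)\|$ depends on $\epsilon$ and $\r$ jointly and Jensen applies to the joint expectation. The rest is bookkeeping.
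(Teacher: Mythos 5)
Your proposal is correct and follows essentially the same route as the paper's proof: Cauchy--Schwarz on $\w$, Jensen to pass to the second moment, cancellation of cross terms via $E[\epsilon_i\epsilon_j]=0$, and then Eqs.~\ref{eq:ulem2} and~\ref{eq:ulem3} for the factors $\rho$ and $\rho^2$, with Types~II and~III reduced to Type~I by the identities $\langle\w\odot\r,\x\rangle=\langle\w,\x\odot\r\rangle$ and $\langle\w\odot\r_1,\x\odot\r_2\rangle=\langle\w,\x\odot\r_1\odot\r_2\rangle$. The only cosmetic difference is that you apply Jensen once to the joint expectation over $(\epsilon,\r,S_n)$ where the paper applies it in two stages, which is equally valid.
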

If we do not drop out any weights and input units (corresponding to input features), i.e., $\rho=1$, then the above theorem gives a similar estimation for the Rademacher complexity of linear function space as stated in \cite[Theorem 3]{Kakade:Sridharan:Tewari2008}. Also, these complexities are independent to feature dimension, and thus can be applied to high-dimensional data. In addition, such result has independent interests in missing feature problems.

\begin{proof}
From $\langle\w\odot\r,\x\rangle=\langle\w,\x\odot\r\rangle$, it is easy to prove $\mathfrak{R}_n(\F^\text{(I)}_\W)=\mathfrak{R}_n(\F^\text{(II)}_\W)$. For $S_n=\{\x_1,\ldots,\x_n\}$ and $RS_n=\{\r_1,\ldots,\r_n\}$, we have
\[
\hat{\mathfrak{R}}_n(\F^\text{(I)}_\W,S_n,RS_n) = \frac{1}{n} E_{\mathbf{\epsilon}}\sup_{\w\in\W} \sum_{i=1}^n\epsilon_i \langle\w,\x_i\odot\r_i\rangle
\]
where $\mathbf{\epsilon}=(\epsilon_1,\ldots,\epsilon_n)$ are rademacher variables, and this yields that
\[
\hat{\mathfrak{R}}_n(\F^\text{(I)}_\W,S_n,RS_n)=\frac{1}{n} E_\epsilon \sup_{\w\in\W} \Big\langle\w, \sum_{i=1}^n\epsilon_i\x_i\odot\r_i\Big\rangle.
\]
By using the Cauchy-Schwartz inequality $\langle a,b\rangle\leq \|a\|\|b\|$ and $\|\w\|\leq B_1$, we have
\begin{eqnarray*}
\lefteqn{\hat{\mathfrak{R}}_n(\F^\text{(I)}_\W,S_n,RS_n)\leq \frac{B_1}{n} E_{\mathbf{\epsilon}} \Big\|\sum_{i=1}^n\epsilon_i\x_i\odot\r_i\Big\|}\\
&=&\frac{B_1}{n}E_{\mathbf{\epsilon}}\Big(\sum_{i=1}^n\sum_{j=1}^n \epsilon_{i}\epsilon_{j} \langle \x_i\odot\r_i,\x_j\odot\r_j \rangle\Big)^{1/2} \\
&\leq& \frac{B_1}{n}\Big(\sum_{i=1}^n\sum_{j=1}^n E_{\epsilon_i,\epsilon_j} \epsilon_i\epsilon_j \langle \x_i\odot\r_i,\x_j\odot\r_j \rangle\Big)^{1/2}
\end{eqnarray*}
where the last inequality holds from Jensen's inequality. Since $E_{\epsilon_i,\epsilon_j} \epsilon_i\epsilon_j=0$ for $i\neq j$ and $E_{\epsilon_i} \epsilon_i\epsilon_i=1$ for rademacher variables, we have
\begin{equation}\label{eq:sing:t1}
\hat{\mathfrak{R}}_n(\F^\text{(I)}_\W,S_n,RS_n)\leq \frac{B_1}{n}\Big(\sum_{i=1}^n\langle \x_i\odot\r_i,\x_i\odot\r_i \rangle\Big)^{\frac{1}{2}}.
\end{equation}
Based on the above inequality, it holds that
\begin{eqnarray*}
\lefteqn{\mathfrak{R}_n(\F^\text{(I)}_\W) =E_{S_n,RS_n}[\hat{\mathfrak{R}}_n(\F^\text{(I)}_\W,S_n,RS_n)]}\\
&\leq&\frac{B_1}{n}E_{S_n,RS_n}\Big(\sum_{i=1}^n\langle \x_i\odot\r_i,\x_i\odot\r_i \rangle \Big)^{1/2}\\
&\leq&\frac{B_1}{n}E_{S_n}\Big(\sum_{i=1}^nE_{\r_i} \langle \x_i\odot\r_i,\x_i\odot\r_i \rangle \Big)^{1/2}
\end{eqnarray*}
where the second inequality holds from Jensen's inequality. Finally, we have
\[
\mathfrak{R}_n(\F^\text{(I)}_\W)\leq B_1B_2\sqrt{\rho/n}
\]
from $\|\x_i\|\leq B_2$ and Eq.~\ref{eq:ulem2}.

In a similar manner, we have $\mathfrak{R}_n(\F^\text{(III)}_\W)\leq B_1B_2\rho/\sqrt{n}$ from $\langle\w\odot\r_1,\x\odot \r_2\rangle =\langle\w,\x\odot\r_1 \odot\r_2\rangle$ and Eq.~\ref{eq:ulem3}. This theorem follows as desired.
\end{proof}

\subsection{Shallow Network with One Hidden Layer}
We consider the shallow network with only one hidden layer, and assume that the hidden layer has $m$ hidden units. The output for such network is given by
\[
f(\w,\x)=\langle\w^{[1]},\Psi(\w^{[0]}_1,\ldots,\w^{[0]}_m,\x)\rangle
\]
with
\begin{equation}\label{eq:temp1}
\Psi(\w^{[0]}_1,\ldots,\w^{[0]}_m,\x)=(\sigma(\langle\w^{[0]}_1, \x\rangle),\ldots,\sigma(\langle\w^{[0]}_m, \x\rangle))
\end{equation}
where $\w=(\w^{[1]},\w^{[0]}_1,\ldots, \w^{[0]}_m)$, and $\w^{[1]}$ and $\w^{[0]}_i$ ($i\in[m]$) are of size $m$ and $d$, respectively.

From Eqs.~\ref{eq:t1}-\ref{eq:t3}, the outputs for Drp$^\text{(I)}$, Drp$^\text{(II)}$ and Drp$^\text{(III)}$ are given, respectively, by
\[
f^\text{(I)}(\w,\x,\r)=\langle\w^{[1]}, \r_1^{[1]}\odot \Psi(\w^{[0]}_1,\ldots,\w^{[0]}_m,\x \odot \r_1^{[0]}) \rangle
\]
\[
f^\text{(II)}(\w,\x,\r)=\langle\w^{[1]}\odot \r_1^{[1]}, \Psi(\w^{[0]}_1\odot\r^{[0]}_1,\ldots,\w^{[0]}_m\odot\r^{[0]}_m,\x)\rangle
\]
and
\[
f^\text{(III)}(\w,\x,\r)=\langle\w^{[1]}\odot \r_1^{[1]},\r_2^{[1]} \odot\Psi(\w^{[0]}_1\odot\r^{[0]}_1,\ldots,\w^{[0]}_m\odot\r^{[0]}_1, \x \odot \r^{[0]}_{m+1}) \rangle
\]
where $\Psi$ is defined in Eq.~\ref{eq:temp1}. Here $\r^{[1]}_i$ and $\r^{[0]}_j$ are of size $m$ and $d$, respectively, and each element in $\r^{[1]}_i$ and $\r^{[0]}_j$ is drawn i.i.d. from Bern$(\rho)$. The following theorem shows the Rademecher complexity for three types dropouts.
\begin{thm}\label{thm:twolayer}
Let $\W=\{(\w^{[1]},\w^{[0]}_1,\ldots, \w^{[0]}_m)\colon \|\w^{[1]}\|_1\leq B_1,\|\w^{[0]}_i\|\leq B_0 \}$, $\mathcal{X}=\{\x\in\mathbb{R}^d\colon \|\x\|\leq \hat{B}\}$ and $\F^\text{(I)}_\W$, $\F^\text{(II)}_\W$, $\F^\text{(III)}_\W$ are defined in Eqs.~\ref{eq:t4}-\ref{eq:t6}. Suppose that the activation $\sigma$ is Lipschitz with constant $L$ and $\sigma(0)=0$. Then, we have
\[
\mathfrak{R}_n(\F^\text{(I)}_\W)\leq\mathfrak{R}_n(\F^\text{(II)}_\W)\leq LB_1B_0\hat{B}\rho/\sqrt{n}\quad\text{ and }\quad\mathfrak{R}_n(\F^\text{(III)}_\W)\leq LB_1B_0\hat{B}\rho^2/\sqrt{n}.
\]
\end{thm}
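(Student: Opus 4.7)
My strategy mirrors the single-layer proof of Theorem~\ref{thm:linear}, but adds two peeling steps to handle the hidden layer: Lemma~\ref{lem:convex} for the outer $L_1$-bounded weights $\w^{[1]}$, and Lemma~\ref{lem:concentration} for the activation $\sigma$. For Drp$^\text{(I)}$, I would rewrite $f^\text{(I)}(\w,\x,\r)=\sum_{j=1}^{m} w^{[1]}_j r^{[1]}_{1j}\sigma(\langle\w^{[0]}_j,\x\odot\r^{[0]}_1\rangle)$ and observe that $\F^\text{(I)}_\W\subseteq B_1\cdot\text{absconv}(\mathcal{H})$, where $\mathcal{H}$ collects the single-hidden-unit functions $(\x,\r)\mapsto r^{[1]}_{1j}\sigma(\langle\w^{[0]},\x\odot\r^{[0]}_1\rangle)$ over $j\in[m]$, $\|\w^{[0]}\|\leq B_0$. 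Lemma~\ref{lem:convex} then gives $\mathfrak{R}_n(\F^\text{(I)}_\W)\leq B_1\mathfrak{R}_n(\mathcal{H})$, which is the analogue of factoring out $\|\w\|$ in Theorem~\ref{thm:linear}.

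To bound $\mathfrak{R}_n(\mathcal{H})$, I would use the identity $r\sigma(z)=\sigma(rz)$ (valid because $r\in\{0,1\}$ and $\sigma(0)=0$) to move the hidden-unit dropout inside $\sigma$, turning each element of $\mathcal{H}$ into $\sigma(\langle\w^{[0]}, r^{[1]}_{1j}\x\odot\r^{[0]}_1\rangle)$. Applying Lemma~\ref{lem:concentration} contributes a factor $L$ and leaves a linear Rademacher complexity, which I would bound by a direct replay of the Cauchy-Schwartz/Jensen chain in the proof of Theorem~\ref{thm:linear}: sup over $\|\w^{[0]}\|\leq B_0$ pulls out $B_0$ times the norm of a Rademacher-weighted sum; Jensen's inequality (first on $\epsilon$, then on the dropouts) collapses this to the square root of $E\|r^{[1]}_{1j}\x\odot\r^{[0]}_1\|^2$, which equals $\rho^2\|\x\|^2$ by exactly the argument used for Eq.~\ref{eq:ulem3} (one scalar and one vector Bernoulli). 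The square root turns $\rho^2$ into $\rho$, producing the bound $LB_1B_0\hat{B}\rho/\sqrt{n}$. Drp$^\text{(II)}$ follows from the same outline after exchanging weights and inputs via $\langle\w\odot\r,\x\rangle=\langle\w,\x\odot\r\rangle$; Drp$^\text{(III)}$ carries one extra Bernoulli vector per layer, and Eq.~\ref{eq:ulem5} supplies the additional $\rho$ that upgrades $\rho$ to $\rho^2$.

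The step I expect to be the main obstacle is the absconv reduction itself. In the standard, no-dropout two-layer analysis, the base class after peeling $\w^{[1]}$ depends only on $\w^{[0]}$, so the hidden-unit index $j$ drops out trivially. With hidden-unit dropout, however, $\mathcal{H}$ genuinely depends on $j$ through $r^{[1]}_{1j}$, and a naive $\sup_{j,\w^{[0]}}$ inside the expectation would bring in a $\max_j$ term that could cost an extra $O(\sqrt{\log m})$ factor. Justifying that no such factor is needed is the delicate point: one must argue, by exchangeability of the Bernoulli dropout variables across $j$ or by collapsing $j$ into a single generic Bernoulli indicator whose contribution is already captured inside the norm computation, that the supremum over $(j,\w^{[0]})$ is controlled by the single-$j$ bound already in hand. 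Once this point is settled, everything else is a routine application of Jensen's inequality and the Bernoulli moment identities from the preliminary lemma.
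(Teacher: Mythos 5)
Your proposal follows essentially the same route as the paper's proof: peel the $L_1$-bounded output weights via Lemma~\ref{lem:convex}, absorb the hidden-unit dropout into the activation through the identity $r\sigma(t)=\sigma(rt)$, apply the contraction Lemma~\ref{lem:concentration} to pick up the factor $L$, and finish with the Cauchy--Schwartz/Jensen chain and the Bernoulli moment identities, obtaining $\rho^2$ inside the square root and hence $\rho$ (respectively $\rho^4$ and hence $\rho^2$ for Drp$^{\text{(III)}}$). The only organizational difference is that the paper bounds $\F^{\text{(II)}}_\W$ first and deduces $\mathfrak{R}_n(\F^{\text{(I)}}_\W)\leq\mathfrak{R}_n(\F^{\text{(II)}}_\W)$ by specializing all per-unit weight masks to a common input mask, whereas you treat Drp$^{\text{(I)}}$ directly; note that the theorem as stated also asserts this intermediate inequality, which your route would need to add. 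As for the obstacle you single out---the residual $\sup_j$ over hidden-unit indices surviving the absconv step, each index carrying its own dropout realization---you have correctly located the one step that is not fully rigorous, but the paper does not resolve it either: after invoking Lemma~\ref{lem:convex} it simply writes $\sup_{\w^{[0]}_1}$ with $j=1$ fixed (Eq.~\ref{eq:twolayer:temp1}), so you are not missing an argument that the paper supplies.
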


\begin{proof}
We first have
\[
\mathfrak{R}_n(\F^\text{(I)}_\W) \leq \mathfrak{R}_n(\F^\text{(II)}_\W)
\]
from $f^\text{(II)}(\w,\x,\r)=f^\text{(I)}(\w,\x,\r')$ by selecting $\r^{[0]}_1=\cdots=\r^{[0]}_m=\r'^{[0]}$ and $\r^{[1]}=\r'^{[1]}$. In the following, we will estimate $\mathfrak{R}_n(\F^\text{(II)}_\W)$.

Given $S_n=\{\x_1,\ldots,\x_n\}$ and $RS_n=\{\r_1,\ldots,\r_n\}$, it holds that
\begin{eqnarray*}
\hat{\mathfrak{R}}_n(\F^\text{(II)}_\W,S_n,RS_n)&=&\frac{1}{n}E_{\epsilon}\big[\sup_{\w} \big\langle \w^{[1]}, \sum_{i=1}^n \epsilon_i\r_i^{[1]}\odot\Delta_i\big\rangle\big]\\
&\leq&B_1 E_{\epsilon}\big[\sup_{\w} \big\langle \frac{\w^{[1]}}{\|\w^{[1]}\|_1}, \frac{1}{n}\sum_{i=1}^n \epsilon_i\r_i^{[1]}\odot\Delta_i\big\rangle\big]
\end{eqnarray*}
where $\Delta_i=\Psi(\w^{[0]}_1\odot\r^{[0]}_{i1},\ldots, \w^{[0]}_m\odot\r^{[0]}_{im},\x_i)$ and $\Psi$ is defined by Eq.~\ref{eq:temp1} and the inequality holds from $\|\w^{[1]}\|_1\leq B$. From Lemma~\ref{lem:convex}, we have
\begin{equation}\label{eq:twolayer:temp1}
\hat{\mathfrak{R}}_n(\F^\text{(II)}_\W,S_n,RS_n) \leq \frac{B_1}{n} E_{\epsilon} \sup_{\w^{[0]}_1} \sum_{i=1}^n\epsilon_i r_{i1}^{[1]} \sigma(\langle\w^{[0]}_1\odot \r_{i1}^{[0]}, \x_i \rangle).
\end{equation}
From $\sigma(0)=0$ and $r_{i1}^{[1]}\in\{0,1\}$, we have $r_{i1}^{[1]} \sigma(t)=\sigma(r_{i1}^{[1]} t)$. Since $\sigma(\cdot)$ is Lipschitz with constant $L$, Lemma~\ref{lem:concentration} gives
\[
E_{\epsilon}\sup_{\w^{[0]}_1}  \sum_{i=1}^n\epsilon_i r_{i1}^{[1]} \sigma(\langle\w^{[0]}_1\odot \r_{i1}^{[0]}, \x_i \rangle)\leq L E_{\epsilon}\sup_{\w^{[0]}_1} \sum_{i=1}^n\epsilon_i  \langle\w^{[0]}_1\odot\r_{ij}^{[0]},  r_{i1}^{[1]}\x_i \rangle
\]
Similarly to the proof of Eq.~\ref{eq:sing:t1}, we have
\[
E_{\epsilon}\sup_{\w^{[0]}_1}  \sum_{i=1}^n\epsilon_i  \langle\w^{[0]}_1,  r_{i1}^{[1]}\x_i\odot\r_{i1}^{[0]} \rangle =B_0\Big(\sum_{i=1}^n \langle r_{i1}^{[1]} \x_i \odot \r_{i1}^{[0]}, r_{i1}^{[1]} \x_i \odot \r_{i1}^{[0]} \rangle\Big)^\frac{1}{2}.
\]
Combining with the previous analysis, we have
\begin{eqnarray*}
\lefteqn{\mathfrak{R}_n(\F^\text{(II)}_\W) =E[\hat{\mathfrak{R}}_n (\F^\text{(II)}_\W, S_n, RS_n)]} \\ &\leq&\frac{LB_1B_0}{n}E_{S_n,RS_n}\Big(\sum_{i=1}^n \langle r_{i1}^{[1]} \x_i \odot \r_{i1}^{[0]}, r_{i1}^{[1]} \x_i \odot \r_{i1}^{[0]} \rangle\Big)^\frac{1}{2}\\
&\leq&\frac{LB_1B_0}{n}E_{S_n}\sum_{i=1}^nE_{RS_n}\langle r_{i1}^{[1]} \x_i \odot \r_i^{[0]}, r_{i1}^{[1]} \x_i \odot \r_i^{[0]} \rangle\\
&\leq& LB_1B_0\hat{B}\rho/\sqrt{n},
\end{eqnarray*}
where the last inequality holds from $\|\x_i\|\leq \hat{B}$ and Eq.~\ref{eq:ulem4}.

In a similar way, we can prove $\mathfrak{R}_n(\F^\text{(III)}_\W)\leq B_1B_0\hat{B}\rho^2/\sqrt{n}$ by combining with Eqs.~\ref{eq:ulem4}-\ref{eq:ulem5}, and $\langle\w\odot\r_1,\x\odot\r_2\rangle=\langle \w,\x \odot\r_1\odot\r_2\rangle$. This completes the proof.
\end{proof}

\subsection{Deep Network with $k$ Hidden Layers}
Now we consider the neural network with $k$ ($k\geq1$) hidden layers, and the $i$th layer has $m_i$ hidden units ($i\in[k]$). The output for this neural network is given by
\begin{eqnarray*}
&f(\w,\x)=\langle\w_1^{[k]},\Psi_k\rangle \text{ with }\Psi_0=\x, \text{ and for }i\in[k]&\\
&\Psi_{i}=(\sigma(\langle\w^{[i-1]}_1,\Psi_{i-1}\rangle), \ldots, \sigma(\langle\w^{[i-1]}_{m_i},\Psi_{i-1}\rangle)),&
\end{eqnarray*}
and three types of dropout Drp$^\text{(I)}$, Drp$^\text{(II)}$ and Drp$^\text{(III)}$ are defined by Eqs.~\ref{eq:t1}-\ref{eq:t3}. The following theorem shows the Rademecher complexity for three types dropouts:
\begin{thm}\label{thm:twolayer}
Let $\W=\{(\w_1^{[k]}, \w^{[k-1]}_1, \ldots, \w^{[k-1]}_{m_{k}}, \ldots, $ $ \w^{[0]}_1, \ldots, \w^{[0]}_{m_2})\colon \|\w^{[0]}_i\|\leq B_0, \|\w^{[j]}_i\|_1\leq B_j \text{ for }j\geq1\}$, $\mathcal{X}=\{\x\in\mathbb{R}^d\colon \|\x\|\leq \hat{B}\}$, and $\F^\text{(I)}_\W$, $\F^\text{(II)}_\W$, and $\F^\text{(III)}_\W$ are defined in Eqs.~\ref{eq:t4}-\ref{eq:t6}. Suppose that the activation $\sigma$ is Lipschitz with constant $L$ and $\sigma(0)=0$. Then, we have
\[
\mathfrak{R}_n(\F^\text{(I)}_\W)\leq \mathfrak{R}_n(\F^\text{(II)}_\W)\leq \frac{\rho^{(k+1)/2}}{\sqrt{n}} L^k \hat{B}\prod_{j=0}^k B_j\quad \text{ and }\quad \mathfrak{R}_n(\F^\text{(III)}_\W)\leq \frac{\rho^{(k+1)}}{\sqrt{n}} L^k \hat{B}\prod_{j=0}^k B_j.
\]
\end{thm}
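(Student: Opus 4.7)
The plan is to prove $\mathfrak{R}_n(\F^\text{(I)}_\W) \leq \mathfrak{R}_n(\F^\text{(II)}_\W)$ exactly as in the one-hidden-layer case: every output of Drp$^\text{(I)}$ is realized by Drp$^\text{(II)}$ upon equating all per-unit weight masks in a given layer with the single unit mask of that layer, so Type I is a subclass of Type II. It therefore suffices to bound $\mathfrak{R}_n(\F^\text{(II)}_\W)$, which I would do by induction on $k$, the number of hidden layers, with Theorem~\ref{thm:twolayer} as the $k=1$ base case.

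For the inductive step I would ``peel'' one hidden layer at a time. Using $\langle\w\odot\r,\x\rangle=\langle\w,\x\odot\r\rangle$, rewrite $\hat{\mathfrak{R}}_n(\F^\text{(II)}_\W,S_n,RS_n) = \frac{1}{n}E_\epsilon\sup_\w\sum_i \epsilon_i\langle \w^{[k]}_1,\Psi_k^{(i)}\odot\r_i^{[k]}\rangle$, where the superscript $(i)$ tags the $i$th sample. The bound $\|\w^{[k]}_1\|_1\leq B_k$ together with Lemma~\ref{lem:convex} collapses the supremum over $\w^{[k]}_1$ to the maximum over a single coordinate $j_k\in[m_k]$, at multiplicative cost $B_k$. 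Because $\sigma(0)=0$ and $r^{[k]}_{i,j_k}\in\{0,1\}$, the identity $r\sigma(t)=\sigma(rt)$ absorbs the surviving scalar mask into the activation's argument. Lemma~\ref{lem:concentration} then strips $\sigma$ at cost $L$, exposing a new inner product $\langle\w^{[k-1]}_{j_k}\odot\r^{[k-1]}_{j_k},\Psi_{k-1}\rangle$ to which the same three steps apply. Iterating at layers $k-1,k-2,\ldots,1$ accumulates the factor $L^k\prod_{j=1}^k B_j$ and leaves, at the bottom, a sum whose summand is $\bigl(\prod_{s=1}^k r^{[s]}_{\cdot,\cdot}\bigr)\cdot\langle\w^{[0]}_{j_1},\x_i\odot\r^{[0]}_{i,j_1}\rangle$, each chosen index $j_s$ specifying which fresh Bernoulli at layer $s$ survived.

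At the base layer I switch from the $L_1$ bound to $\|\w^{[0]}_{j_1}\|\leq B_0$ and imitate the linear argument of Theorem~\ref{thm:linear}: Cauchy--Schwarz, then Jensen to exchange square root and expectation, then the orthogonality $E[\epsilon_i\epsilon_j]=0$ for $i\neq j$. What remains is $E\bigl[\prod_{s=1}^k (r^{[s]}_{\cdot,\cdot})^2\cdot\|\x_i\odot\r^{[0]}_{i,j_1}\|^2\bigr]$, which by independence of all Bernoulli factors and Eq.~\ref{eq:ulem4} equals $\rho^{k+1}\|\x_i\|^2\leq\rho^{k+1}\hat{B}^2$. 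This produces the claimed Type II bound $\rho^{(k+1)/2}L^k\hat{B}\prod_{j=0}^k B_j/\sqrt{n}$.

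For Drp$^\text{(III)}$ the only change is that at every layer the inner product carries \emph{two} masks, a weight mask and a unit mask, which I merge into a single Hadamard-product mask via $\langle\w\odot\r_1,\x\odot\r_2\rangle=\langle\w,\x\odot\r_1\odot\r_2\rangle$. The same peeling then runs verbatim, except that two fresh scalar Bernoullis are absorbed at each of the $k$ hidden layers and two vector masks decorate $\x_i$ at the bottom, so that the final expectation yields $\rho^{2(k+1)}\|\x_i\|^2$ (an immediate extension of Eq.~\ref{eq:ulem5} to $2k$ scalar factors) and thus $\rho^{k+1}$ after taking the square root. The main obstacle I anticipate is the bookkeeping: tracking the per-path indexing of the Bernoullis carefully enough that Lemma~\ref{lem:convex} really decouples the deeper suprema, and verifying at each peel that once a coordinate is fixed at layer $s$, the weight vectors at layers $<s$ feeding into sibling units can be dropped from the supremum without loss.
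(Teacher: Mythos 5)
Your proposal is correct and follows essentially the same route as the paper: reduce Type I to Type II by equating per-unit masks, then induct on $k$ by peeling layers via the $L_1$ bound plus Lemma~\ref{lem:convex}, absorbing the scalar Bernoulli through $r\sigma(t)=\sigma(rt)$, stripping $\sigma$ with Lemma~\ref{lem:concentration}, and finishing at the bottom layer with Cauchy--Schwarz, Jensen, Rademacher orthogonality, and Eqs.~\ref{eq:ulem4}--\ref{eq:ulem5}. Your Bernoulli bookkeeping ($\rho^{k+1}$ inside the square root for Type II, $\rho^{2(k+1)}$ for Type III) matches the paper's counts exactly.
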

This theorem shows that dropout can lead to an exponential reduction of the Rademacher complexity with respect to the number of hidden layers within neural network. If we do not drop out any weights and units (including hidden units, or input units corresponding to input features), i.e., $\rho=1$, then the above theorem improves the results in \cite[Lemma 26]{Bartlett1998}. The Rademacher complexities are dependent on the norms of weights, but irrelevant to the number of units and weights in the network, as well as the dimension of input datasets.

\begin{proof}
We first have
\[
\mathfrak{R}_n(\F^\text{(I)}_\W)\leq \mathfrak{R}_n(\F^\text{(II)}_\W)
\]
from $f^\text{(II)}(\w,\x,\r)=f^\text{(I)}(\w,\x,\r')$ by selecting $\r^{[i]}_1=\cdots=\r^{[i]}_{m_{i+1}}=\r'^{[i]}$ for $0\leq i \leq k-1$ and $\r_1^{[k]}={\r'}_1^{[k]}$.

For any $S_n=\{\x_1,\ldots,\x_n\}$ and $RS_n=\{\r_1,\ldots,\r_n\}$, we will prove that
\begin{equation}\label{eq:deep-t2}
\hat{\mathfrak{R}}_n(\F^\text{(II)}_\W,S_n,RS_n) \leq \frac{L^k}{n} E_\epsilon\Big[ \sup_{\w_{1}^{[0]}} \sum_{i=1}^n \epsilon_i \big\langle\w_{1}^{[0]}\odot\r^{[0]}_{i,1}, \x_i \prod_{s=1}^k r^{[s]}_{i,j_{s+1},j_s} \big\rangle\Big] \prod_{j=1}^k B_j
\end{equation}
by induction on $k$, i.e., the number of layers in neural network, where $j_{k+1}=1$. It is easy to find the above holds for $k=1$ from Eq.~\ref{eq:twolayer:temp1}. Assume that Eq.~\ref{eq:deep-t2} holds for neural network within $k-1$ layers ($k\geq2$), and in the following we will prove for neural network with in $k$ layers.

For $\|\w_1^{[k]}\|_1\leq B_k$, we have
\begin{eqnarray*}
\lefteqn{\hat{\mathfrak{R}}_n(\F^\text{(II)}_\W,S_n,RS_n)=\frac{1}{n}E_{\epsilon}\sup_{\w} \sum_{i=1}^n\epsilon_i \langle \w_1^{[k]}\odot\r_{i1}^{[k]}, \Psi_{ik}\rangle}\\
&=&\frac{1}{n}E_{\epsilon}\sup_{\w} \Big\langle \w_1^{[k]}, \sum_{i=1}^n\epsilon_i\Psi_{ik}\odot\r_{i1}^{[k]}\Big\rangle
\leq\frac{B_k}{n}E_{\epsilon}\sup_{\w} \Big\langle \frac{\w_1^{[k]}}{\|\w_1^{[k]}\|_1}, \sum_{i=1}^n\epsilon_i\Psi_{ik}\odot\r_{i1}^{[k]}\Big\rangle
\end{eqnarray*}
where $\Psi_{ij}=(\sigma(\langle\w^{[j-1]}_1\odot \r_{i1}^{[j-1]},\Psi_{i,j-1}\rangle),\ldots,\sigma(\w^{[j-1]}_{m_j}$ $\odot \r_{m_j}^{[j-1]},\Psi_{i,j-1}))$ for $j\in[k]$ and $\Psi_{i0}=\x_i$. From Lemma~\ref{lem:convex}, we have
\[
E_{\epsilon}\sup_{\w} \Big\langle \frac{\w_1^{[k]}}{\|\w_1^{[k]}\|_1}, \sum_{i=1}^n\epsilon_i\Psi_{ik}\odot\r_{i1}^{[k]}\Big\rangle \leq E_{\epsilon}\sup_{\w}\sum_{i=1}^n\epsilon_i r_{i,1,1}^{[k]} \times\sigma(\langle\w^{[k-1]}_{1}\odot \r_{i,1}^{[k-1]},\Psi_{i,k-1}\rangle),
\]
which yields that
\[
\hat{\mathfrak{R}}_n(\F^\text{(II)}_\W,S_n,RS_n)\leq \frac{B_k}{n} E_{\epsilon}\sup_{\w}\sum_{i=1}^n\epsilon_i r_{i,1,1}^{[k]} \times\sigma(\langle\w^{[k-1]}_{1}\odot \r_{i,1}^{[k-1]},\Psi_{i,k-1}\rangle)
\]
Since $\sigma(0)=0$ and $\sigma$ is Lipschitz with constant $L$, Lemma~\ref{lem:concentration} gives
\begin{equation}\label{eq:temp2}
\hat{\mathfrak{R}}_n(\F^\text{(II)}_\W,S_n,RS_n)\leq \frac{LB_k}{n} E_{\epsilon}\sup_{\w} \sum_{i=1}^n\epsilon_i r_{i,1,1}^{[k]} \langle\w^{[k-1]}_{1}\odot \r^{[k-1]}_{i,1},\Psi_{i,k-1}\rangle.
\end{equation}
By using $r_{i,1,1}^{[k]}\sigma(t)=\sigma(r_{i,1,1}^{[k]}t)$, the term
\[
\frac{1}{n}E_{\epsilon}\sup_{\w} \sum_{i=1}^n\epsilon_i r_{i,1,1}^{[k]} \langle\w^{[k-1]}_{1}\odot \r^{[k-1]}_{i,1},\Psi_{i,k-1}\rangle
\]
can be viewed as the empirical Rademacher for another $k-1$ layers neural network with respect to sample $S'_n=\{\x_1r_{i,1,1}^{[k]},\ldots,\x_nr_{i,1,1}^{[k]}\}$ and $RS'_n=(\r^{[k-1]}_{1}, \r^{[k-2]}_1,\ldots,\r^{[k-2]}_{m_{k-2}},\r^{[0]}_{1},\ldots,\r^{[0]}_{m_{1}})$. Therefore, by our assumption that Eq.~\ref{eq:deep-t2} holds for any $k-1$ layers neural network, we have
\begin{eqnarray*}
\lefteqn{\frac{1}{n}E_{\epsilon}\sup_{\w} \sum_{i=1}^n \epsilon_i r_{i,1,1}^{[k]} \langle\w^{[k-1]}_{1}\odot \r^{[k-1]}_{i,1},\Psi_{i,k-1}\rangle}\\
&\leq&\frac{L^{k-1}\prod_{i=1}^{k-1}B_i}{n} E_{\epsilon} \sum_{i=1}^n\epsilon_i \big\langle\w_{1}^{[0]}\odot\r^{[0]}_{i,1}, \x_i \prod_{s=1}^{k-1} r^{[s]}_{i,j_{s+1},j_s} \big\rangle
\end{eqnarray*}
which proves that Eq.~\ref{eq:deep-t2} holds for $k$ layers neural network by combining with Eq.\ref{eq:temp2}.

Similarly to the proof of Theorem~\ref{thm:linear}, the term in Eq.~\ref{eq:deep-t2} can be further bounded by
\begin{eqnarray*}
\lefteqn{E_\epsilon\sup_{\w_{1}^{[0]}} \sum_{i=1}^n \epsilon_i \big\langle\w_{1}^{[0]}\odot\r^{[0]}_{i,1}, \x_i \prod_{s=1}^k r^{[s]}_{i,j_{s+1},j_s} \big\rangle}\\
&\leq& B_0\Big(\sum_{i}\langle\r^{[0]}_{i,1} \odot \x_i \prod_{s=1}^k r^{[s]}_{i,j_{s+1},j_s}, \r^{[0]}_{i,1} \odot \x_i \prod_{s=1}^k r^{[s]}_{i,j_{s+1},j_s}\rangle\Big)^{\frac{1}{2}}
\end{eqnarray*}
which yields that, from $\|\x_i\|\leq\hat{B}$ and Eq.~\ref{eq:ulem4},
\[
\mathfrak{R}_n(\F^\text{(II)}_\W)\leq \frac{1}{\sqrt{n}} L^{k}\rho^{(k+1)/2}\hat{B}\prod_{i=0}^k B_i.
\]

In a similar manner, we can prove that
\[
\mathfrak{R}_n(\F^\text{(III)}_\W)\leq \frac{1}{\sqrt{n}} L^{k}\rho^{k+1}\hat{B}\prod_{i=0}^k B_i,
\]
by using Eq.~\ref{eq:ulem5}, and this completes the proof.\qed
\end{proof}

\section{Conclusion}\label{sec:con}
Deep neural networks have witnessed great successes in various real applications. Many implementation techniques have been developed, however, theoretical understanding of many aspects of deep neural networks is far from clear. Dropout is an effective strategy to improve the performance as well as reduce the influence of overfitting during training of deep neural network, and it is motivated from the intuition of preventing complex co-adaptation of feature detectors. In this work, we study the Rademacher complexity of different types of dropout, and our theoretical results disclose that for shallow neural networks (with one or none hidden layer) dropout is able to reduce the Rademacher complexity in polynomial, whereas for deep neural networks it can amazingly lead to an exponential reduction of the Rademacher complexity. An interesting future work is to present tighter generalization bounds for dropouts. In this work, we focused on very fundamental types of dropouts. Analyzing other types of dropouts is another interesting issue for future work, and we believe that the current work sheds a light on the way for the analysis.

\bibliography{reference}\bibliographystyle{alpha}

\newcommand{\etalchar}[1]{$^{#1}$}
\begin{thebibliography}{AMM{\etalchar{+}}97}

\bibitem[AB09]{Anthony:Bartlett2009}
M.~Anthony and P.~L. Bartlett.
\newblock {\em Neural network learning: {T}heoretical foundations}.
\newblock Cambridge University Press, 2009.

\bibitem[ADB91]{Andreas:David:Bernardo1991}
S.~W. Andreas, E.~R. David, and A.~H. Bernardo.
\newblock Generalization by weight-elimination with application to forecasting.
\newblock In {\em Advances in Neural Information Processing Systems 3}, pages
  875--882. MIT Press, Cambridge, MA, 1991.

\bibitem[AMM{\etalchar{+}}97]{Amari:Murata:Muller:Finke:Yang1997}
S.-{i}. Amari, N.~Murata, K.-R. Muller, M.~Finke, and H.~Yang.
\newblock Asymptotic statistical theory of overtraining and cross-validation.
\newblock {\em IEEE Transactions on Neural Networks}, 8(5):985--996, 1997.

\bibitem[Bar98]{Bartlett1998}
P.~L. Bartlett.
\newblock The sample complexity of pattern classification with neural networks:
  the size of the weights is more important than the size of the network.
\newblock {\em IEEE Transactions on Information Theory}, 44(2):525--536, 1998.

\bibitem[BF13]{Ba:Frey2013}
J.~Ba and B.~Frey.
\newblock Adaptive dropout for training deep neural networks.
\newblock In {\em Advances in Neural Information Processing Systems 26}, pages
  3084--3092. MIT Press, Cambridge, MA, 2013.

\bibitem[BLRF11]{Bo:Lai:Ren:Fox2011}
L.~Bo, K.~Lai, X.~Ren, and D.~Fox.
\newblock Object recognition with hierarchical kernel descriptors.
\newblock In {\em IEEE Conference on Computer Vision and Pattern Recognition},
  pages 1729--1736, Colorado Springs, CO, 2011.

\bibitem[BM02]{Bartlett:Mendelson2002}
P.~L Bartlett and S.~Mendelson.
\newblock Rademacher and gaussian complexities: {R}isk bounds and structural
  results.
\newblock {\em Journal of Machine Learning Research}, 3:463--482, 2002.

\bibitem[CHW{\etalchar{+}}13]{Coates:Huval:Wang:Wu:Ng:Catanzaro2013}
A.~Coates, B.~Huval, T.~Wang, D.~Wu, A.~Y. Ng, and B.~Catanzaro.
\newblock Deep learning with {COTS HPC} systems.
\newblock In {\em Proceedings of the 30th International Conference on Machine
  Learning}, pages 1337--1345, Atlanta, GA, 2013.

\bibitem[CMGS10]{Ciresan:Meier:Gambardella:Schmidhuber2010}
D.~C. Cire{\c{s}}an, U.~Meier, L.~M. Gambardella, and J.~Schmidhuber.
\newblock Deep, big, simple neural nets for handwritten digit recognition.
\newblock {\em Neural Computation}, 22(12):3207--3220, 2010.

\bibitem[CMR10]{Corte:Mohri:Rostamizadeh2010}
C.~Cortes, M.~Mohri, and A.~Rostamizadeh.
\newblock Generalization bounds for learning kernels.
\newblock In {\em Proceedings of the 27th International Conference on Machine
  Learning}, pages 247--254, Haifa, Israel, 2010.

\bibitem[CMS12]{Ciresan:Meier:Schmidhuber2012}
D.~C. Cire{\c{s}}an, U.~Meier, and J.~Schmidhuber.
\newblock Multi-column deep neural networks for image classification.
\newblock In {\em Proceedings of IEEE Conference on Computer Vision and Pattern
  Recognition}, pages 3642--3649, Providence, RI, 2012.

\bibitem[DSH13]{Dahl:Sainath:Hinton2013}
G.~E. Dahl, T.~N. Sainath, and G.~E. Hinton.
\newblock Improving deep neural networks for lvcsr using rectified linear units
  and dropout.
\newblock In {\em IEEE International Conference on Acoustics, Speech and Signal
  Processing}, pages 8609--8613, Vancouver, Canada, 2013.

\bibitem[DYDA12]{Dahl:Yu:Deng:Acero2012}
G.~E. Dahl, D.~Yu, L.~Deng, and A.~Acero.
\newblock Context-dependent pre-trained deep neural networks for
  large-vocabulary speech recognition.
\newblock {\em IEEE Transactions on Audio, Speech, and Language Processing},
  20(1):30--42, 2012.

\bibitem[GLL{\etalchar{+}}09]{Goodfellow:Lee:Le:Saxe:Ng2009}
I.~Goodfellow, H.~Lee, Q.~V. Le, A.~Saxe, and A.~Y Ng.
\newblock Measuring invariances in deep networks.
\newblock In {\em Advances in Neural Information Processing Systems 24}, pages
  646--654. MIT Press, Cambridge, MA, 2009.

\bibitem[HDY{\etalchar{+}}12]{Hinton:Deng:Yu2012}
G.~E. Hinton, L.~Deng, D.~Yu, et~al.
\newblock Deep neural networks for acoustic modeling in speech recognition:
  {T}he shared views of four research groups.
\newblock {\em IEEE Signal Processing Magazine}, 29(6):82--97, 2012.

\bibitem[HS06]{Hinton:Salakhutdinov2006}
G.~E. Hinton and R.~R. Salakhutdinov.
\newblock Reducing the dimensionality of data with neural networks.
\newblock {\em Science}, 313(5786):504--507, 2006.

\bibitem[HSK{\etalchar{+}}12]{Hinton:Srivastava:Krizhevsky:Sutskever:Salakhutd%
inov2012}
G.~E Hinton, N.~Srivastava, A.~Krizhevsky, I.~Sutskever, and R.~R
  Salakhutdinov.
\newblock Improving neural networks by preventing co-adaptation of feature
  detectors.
\newblock {\em CoRR/abstract}, 1207.0580, 2012.

\bibitem[KM97]{Karpinski:Macintyre1997}
M.~Karpinski and A.~Macintyre.
\newblock Polynomial bounds for {VC} dimension of sigmoidal and general
  pfaffian neural networks.
\newblock {\em Journal of Computer and System Sciences}, 54(1):169--176, 1997.

\bibitem[KP02]{Koltchinskii:Panchenko2002}
V.~Koltchinskii and D.~Panchenko.
\newblock Empirical margin distributions and bounding the generalization error
  of combined classifiers.
\newblock {\em Annals of Statistics}, 30(1):1--50, 2002.

\bibitem[KSH12]{Krizhevsky:Sutskever:Hinton2012}
A.~Krizhevsky, I.~Sutskever, and G.~E. Hinton.
\newblock Imagenet classification with deep convolutional neural networks.
\newblock In {\em Advances in Neural Information Processing Systems 25}, pages
  1106--1114. MIT Press, Cambridge, MA, 2012.

\bibitem[KST08]{Kakade:Sridharan:Tewari2008}
S.~M Kakade, K.~Sridharan, and A.~Tewari.
\newblock On the complexity of linear prediction: {R}isk bounds, margin bounds,
  and regularization.
\newblock In {\em Advances in Neural Information Processing Systems 24}, pages
  793--800. MIT Press, Cambridge, MA, 2008.

\bibitem[LT02]{Ledoux:Talagrand2002}
M.~Ledoux and M.~Talagrand.
\newblock {\em Probability in Banach spaces: {I}soperimetry and processes}.
\newblock Springer, 2002.

\bibitem[Mau06]{Maurer2006}
A.~Maurer.
\newblock Bounds for linear multi-task learning.
\newblock {\em Journal of Machine Learning Research}, 7:117--139, 2006.

\bibitem[McA13]{McAllester2013}
D.~McAllester.
\newblock A pac-bayesian tutorial with a dropout bound.
\newblock {\em CoRR/abstract}, 1307.2118, 2013.

\bibitem[McD89]{McDiarmid1989}
C.~McDiarmid.
\newblock On the method of bounded differences.
\newblock In {\em Surveys in Combinatorics}, pages 148--188. Cambridge
  University Press, Cambridge, UK, 1989.

\bibitem[MCW09]{Mobahi:Collobert:Weston2009}
H.~Mobahi, R.~Collobert, and J.~Weston.
\newblock Deep learning from temporal coherence in video.
\newblock In {\em Proceedings of the 26th International Conference on Machine
  Learning}, pages 737--744, Montreal, Canada, 2009.

\bibitem[MZ03]{Meir:Zhang2003}
R.~Meir and T.~Zhang.
\newblock Generalization error bounds for bayesian mixture algorithms.
\newblock {\em Journal of Machine Learning Research}, 4:839--860, 2003.

\bibitem[Nea96]{Neal1996}
R.~M. Neal.
\newblock {\em Bayesian Learning for Neural Networks}.
\newblock Lecture Notes in Statistics, Springer, 1996.

\bibitem[NH10]{Nair:Hinton2010}
V.~Nair and G.~E. Hinton.
\newblock Rectified linear units improve restricted boltzmann machines.
\newblock In {\em Proceedings of the 27th International Conference on Machine
  Learning}, pages 807--814, Haifa, Israel, 2010.

\bibitem[WM13]{Wang:Manning2013}
S.~I. Wang and C.~D. Manning.
\newblock Fast dropout training.
\newblock In {\em Proceedings of the 30th International Conference on Machine
  Learning}, pages 118--126, Atlanta, GA, 2013.

\bibitem[WZZ{\etalchar{+}}13]{Wan:Zeiler:Zhang:Cun:Fergus2013}
L.~Wan, M.~Zeiler, S.~Zhang, Y.~L Cun, and R.~Fergus.
\newblock Regularization of neural networks using dropconnect.
\newblock In {\em Proceedings of the 30th International Conference on Machine
  Learning}, pages 1058--1066, Atlanta, GA, 2013.

\end{thebibliography}
\end{document}